
\documentclass{article}

\usepackage{microtype}
\usepackage{graphicx}
\usepackage[caption=false,font=footnotesize]{subfig}
\usepackage{booktabs} 

\usepackage{hyperref}




\usepackage[accepted]{icml2023}

\usepackage{amsmath}
\usepackage{amssymb}
\usepackage{mathtools}
\usepackage{amsthm}

\usepackage[capitalize,noabbrev]{cleveref}
\usepackage{threeparttable}
\usepackage{pdfpages}
\usepackage{multirow}

\theoremstyle{plain}
\newtheorem{theorem}{Theorem}[section]
\newtheorem{proposition}[theorem]{Proposition}
\newtheorem{lemma}[theorem]{Lemma}

\theoremstyle{definition}

\theoremstyle{remark}

\usepackage[textsize=tiny]{todonotes}

\icmltitlerunning{Transformers are Universal Predictors}

\begin{document}

\twocolumn[
\icmltitle{Transformers are Universal Predictors}




\begin{icmlauthorlist}
\icmlauthor{Sourya Basu}{lab,ece}
\icmlauthor{Moulik Choraria}{lab,ece}
\icmlauthor{Lav R.\ Varshney}{lab,ece}
\end{icmlauthorlist}

\icmlaffiliation{lab}{Coordinated Science Laboratory, University of Illinois Urbana-Champaign}
\icmlaffiliation{ece}{Electrical and Computer Engineering, University of Illinois Urbana-Champaign}

\icmlcorrespondingauthor{}{\{sourya, moulikc2, varshney\}@illinois.edu}

\icmlkeywords{Causal inference, discontinuity regression, dynamic programming, quantization theory, reinforcement learning}

\vskip 0.3in
]



\printAffiliationsAndNotice{}  

\begin{abstract}
We find limits to the Transformer architecture for language modeling and show it has a universal prediction property in an information-theoretic sense. We further analyze performance in non-asymptotic data regimes to understand the role of various components of the Transformer architecture, especially in the context of data-efficient training.  We validate our theoretical analysis with experiments on both synthetic and real datasets.
 \end{abstract}

\section{Introduction}
\label{sec: intro}
Language models that aim to predict the next token or word to continue/complete a prompt have their origins in the work of \citet{Shannon1948, Shannon1950b, Shannon1951}.  In recent years, neural language models have taken the world by storm, especially the Transformer architecture~\cite{vaswani2017attention}.  Following work on other neural network architectures \cite{Cybenko1989}, one can show that the Transformer architecture has a \emph{universal approximation} property for sequence-to-sequence functions \cite{YunBRRK2020}.

Transformer architectures have excellent performance and parallelization capability on natural language processing (NLP) tasks, becoming central to several state-of-the-art models including GPT-4 \cite{OpenAI2023} and PaLM 2 \cite{Anil_ea2023}.  Such large language models (LLMs) are not only very good at the statistical problem of predicting the next token, but also have emergent capabilities in tasks that seemingly require higher-level semantic ability \cite{Wei_ea2022}.
Moreover, transformer-based architectures have achieved tremendous attention on domains beyond NLP, such as images~\cite{dosovitskiy2020image}, audio~\cite{li2019neural}, reinforcement learning~\cite{chen2021decision}, and even multi-modal tasks~\cite{jaegle2021perceiver}. Transformers also show cross-domain transfer learning capabilities, i.e., models trained on NLP tasks show good performance when fine-tuned for non-NLP tasks such as image processing. In this sense, the Transformer architecture is said to have a \emph{universal computation} property~\cite{lu2021pretrained}, reminiscent of predictive coding hypotheses of the brain that posit one basic operation in neurobiological information processing \cite{NEURIPS2022_5b5de852}. 

The basic predictive workings of Transformers and previous findings of universal approximation and computation properties motivate us to ask whether they also have a \emph{universal prediction} property in the information-theoretic sense \cite{FederMG1992, WeissmanM2001}, which itself is well-known to be intimately related to universal data compression \cite{MerhavF1998}.  As far as we know, the predictive capability of Transformers has not been studied in an information-theoretic sense, cf.~\citet{GurevychKS2022}.

We investigate not only the underlying mathematical principles that govern the performance of Transformers,  but also aim to find limitations to their learning capabilities.
We show that Transformers are indeed universal predictors, i.e.\ they can achieve information-theoretic limits asymptotically in the amount of data available. We also analyze their performance in the finite-data regime by understanding the role of various components of the Transformer architecture, providing theoretical explanations wherever applicable.

To summarize our main results, we find the limits to performance of Transformers and show they are optimal predictors. Our limits only assume the Markov nature of data and are otherwise universal. Moreover, we analyze the role of the major components of a Transformer and provide better understanding and directions for their data-efficient training. Finally, we validate our theoretical analysis by performing experiments on both synthetic and real datasets.
\section{Definitions and Preliminaries}\label{sec: defn}
\subsection{Finite-State Markov Processes (FSMPs)}\label{subsec: fsmps}
Let $x = \{x_1, \ldots x_n\} \in \mathcal{X}^n$ be sequential data, where $\mathcal{X}$ is some finite set. The state sequence of an FSMP, $s = \{s_1, \ldots, s_{n-1}\}$, is generated recursively according to $s_i = g(x_i, \ldots, x_{(i-k+1)_+})$, where $g(\cdot)$ is the state function for this FSMP, $s_i \in \mathcal{S}$, and $(i)_+ = \max{\{i, 1\}}$. An FSMP has a predictor function $f(\cdot)$ that outputs a probability distribution over possible values for $x_{i+1}$, i.e.\ $\hat{x}_{i+1} = f(s_i) \in \mathbb{R}^{|\mathcal{X}|}$, $\sum_{j} f(s_i)_j = 1$. Hence, a FSMP is given by the pair $(g, f)$.

\subsection{Transformers}\label{subsec: transformers}
A Transformer architecture consist of three components placed in a series: an input embedding layer, multiple attention layers, and an output projection matrix. Let us describe each of the subcomponents of a Transformer.

\paragraph{Input embedding layer} Let the input be sequential data $x = \{x_1, \ldots x_{n-1}\} \in \mathcal{X}^{n-1}$. The embedding layer $E$ processes the input sequence individually to give a sequence $z = \{z_1, \ldots z_{n-1}\} \in \mathbb{R}^{(n-1) \times d_{in}}$. 

\paragraph{Attention layer} The attention layer further consists of two subcomponents: the self attention mechanism and the position-wise feedforward network.

The masked self-attention layer takes input $X = [x_1, x_2, \ldots, x_{n-1}]^\text{T} \in \mathbb{R}^{(n-1)\times d_{in}}$. Queries ($Q$), keys ($K$), and values ($V$) are computed from $X$ by multiplying with three corresponding matrices as $Q=XW_{Q}, K=XW_{K}$, and $V=XW_{V}$, where each matrix $W_{Q}, W_{K}$, and $W_{V}$ is of dimension $d_{in} \times d_{model}$. The output of the self-attention layer, $H^{(0)} = [h^{(0)}_1, h^{(0)}_2, \ldots, h^{(0)}_{n-1}]$ is given by 
\begin{align*}
    &H^{(0)} \\ &= \text{Attention}(Q, K, V)
            = \text{softmax}(\text{mask}_{-\infty}(\frac{QK^{\text{T}}}{\sqrt{d_{model}}}) )V,
\end{align*}
where mask$_{-\infty}$ is the causal binary mask used to preserve the auto-regressive property of language modeling by ensuring $h^{(0)}_i$ depends only on $x_{j \leq i}$ by setting all the entries in the matrix $QK^{\text{T}}$ corresponding to connections to $x_{j>i}$ to $-\infty$. 

This attention mechanism is extended to multi-head attention with $m$ heads by simply dividing the inputs of dimension $d_{model}$ into $m$ sub-parts and computing attention separately and then concatenating them.

Masked self-attention followed by position-wise feedforward network, $FFN$, gives a Transformer decoder layer $H^{(1)} = [h^{(1)}_1, h^{(1)}_2, \ldots, h^{(1)}_{n-1}]$.

\paragraph{Output projection matrix} The output projection matrix $W_p$ takes as input a sequence of dimension $\mathbb{R}^{(n-1)\times d_{model}}$ and outputs probabilities on the output space of dimension $\mathbb{R}^{(n-1) \times |\mathcal{V}|}$, where $\mathcal{V}$ is the vocabulary space.

An $L$-layered Transformer decoder consists of an embedding layer, followed by $L$ attention layers, followed by an output projection matrix. A single-layered Transformer decoder is shown in Fig.~\ref{fig: transformer_decoder_layer}.

\begin{figure}
    \centering
    \includegraphics[width=0.45\textwidth]{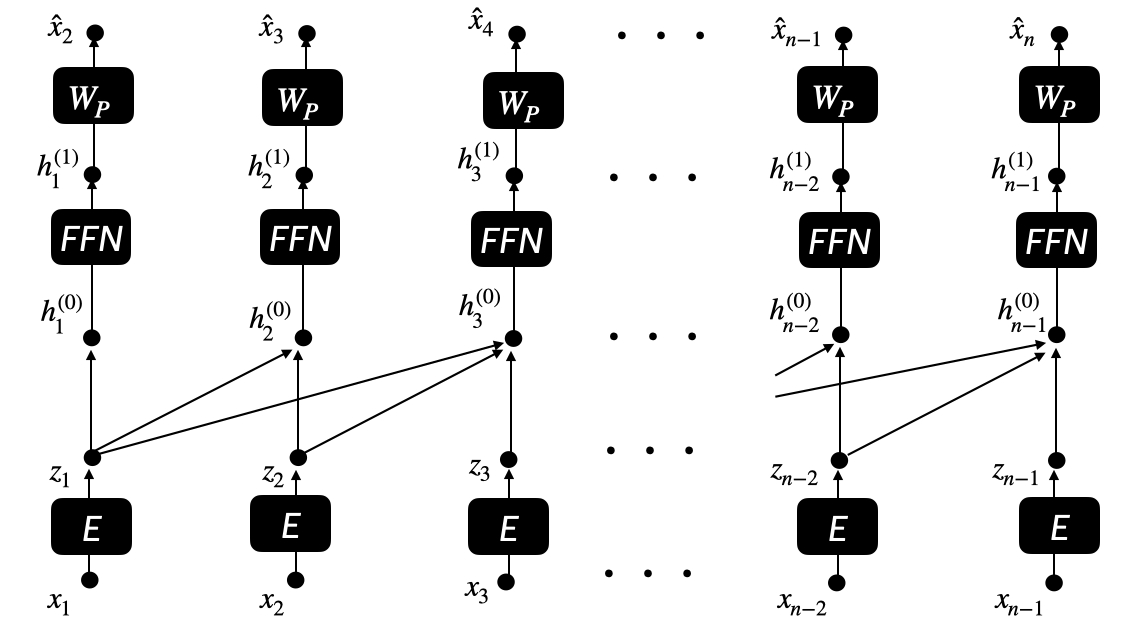}
    \caption{Single-layered Transformer decoder with attention span, $k$, $=3$.}
    \label{fig: transformer_decoder_layer}
\end{figure}

\section{Performance Limits of Transformers}\label{sec: universal_predictability}
Here we provide theoretical limits to the performance of the Transformer architecture. First, we show that the Transformer architecture can be viewed as an approximate FSMP. 

\subsection{Transformers as approximate FSMPs}\label{subsec: transformers_as_fsmps}
An FSMP as defined in Sec.~\ref{subsec: fsmps} is given by a function pair $(g, f)$, where $g$ is a state function that first \emph{aggregates} certain past observations $(x_j, \ldots, x_{i-1})$, where $j < (i-1)$ is a choice for the $g$ function and $f$ is a probability function from the states given by $g$ to $\mathbb{R}^{|\mathcal{V}|}$.

In an $L$-layered Transformer, we model the $g$ function by the embedding layer $E$ followed by a sequence of $L$ attention layers. The $l$th attention layer computes the weighted sum of past observations $(h^{(2l)}_j, \ldots, h^{(2l)}_{i-1})$ followed by an $FFN$. Note that the weighted sum in the attention mechanism is performed in the higher dimension $d_{model}$, which can retain as much information as concatenation in lower dimension if $d_{model}$ is large enough. The output of the embedding layer followed by $L$-attention layers can be seen as approximating the $g$ function, call it $\Bar{g}$. 

This is followed by the output projection matrix $W_P$, which can be seen as approximating the output probability function $f$, call it $\Bar{f}$. Fig.~\ref{fig: transformer_as_FSMP} shows a single-layer Transformer, comparing its components with that of an FSMP.

\begin{figure}
    \centering
    \includegraphics[width=0.35 \textwidth]{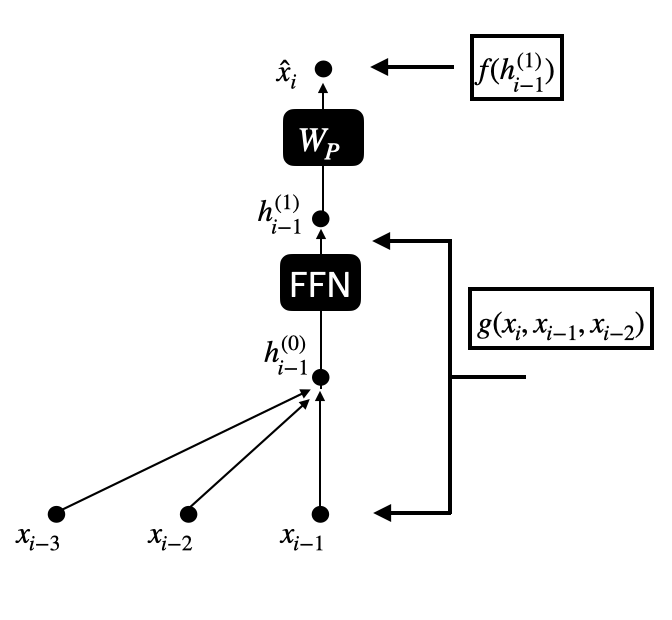}
    \caption{
    A single Transformer decoder layer with attention span equals to three ($k=3$) and its comparison to a finite-state Markov predictor (FSMP). The output of the Transformer decoder, $h_{i-1}^1$ can be interpreted as the output of the state function, $g(x_i, x_{i-1}, \ldots, x_{i-k+1})$, of a FSMP. The matrix $W_P$ can be interpreted as the predictor function $f(\cdot)$ of a FSMP that takes the state as input and outputs a probability distribution over the vocabulary space. The embedding layer is ignored for brevity.}
    \label{fig: transformer_as_FSMP}
\end{figure}

\subsection{Theoretical Limits}\label{subsec: theoretical_limits}
Here, we use the similarity between Transformers and FSMPs to find the limits of the Transformer architecture. First we provide the training setup and loss criterion for our results. Note that the only assumption we use in the following data generation process is of Markovity and hence, the results obtained are general otherwise. We first describe the dataset and the loss criterion.

\paragraph{Dataset} We consider the train dataset $\mathcal{D}_{Train} = \{x_1,\ldots,x_n; y_1, \ldots, y_n\}$ of size $n$ and test dataset $\mathcal{D}_{Test}=\{x_{n+1}, \ldots, x_{n+m}; y_{n+1},\ldots,y_{n+m}\}$ of size $m$. In the datasets, $\{x_i\}_{i=1}^{n+m}$ is an arbitrary sequence with distribution $p(x_1, \ldots, x_{n+m})$ and $y_i$ is generated conditioned on $g(x_i, \ldots, x_{(i-l+1)_+})$, where $g$ is a state function.
Thus, we consider a sequence of Markov order $l$. For our theoretical analysis, we use binary vocabulary, i.e.\ $x_i, y_i \in \{0,1\}$, and will let $m \to \infty$.

\paragraph{Loss criterion} A model is trained on $\mathcal{D}_{Train}$ with some state function $\Bar{g}$ to obtain an order $k$ stationary Markov estimator  $p^n_{\theta_k}(\cdot)$, which is the $f$ function in the context of an FSMP.

The train loss on $\mathcal{D}_{Train}$ is given by $\mathcal{L}_{Train}^{(l,k,n)}(\mathcal{D}_{Train}) = -\frac{1}{n}\sum_{i=1}^n \log{(p^n_{\theta_k}(y_i|\Bar{g}(x_{\leq i})))}.
$

Denote the loss on $\mathcal{D}_{Test}$ by $\mathcal{L}_{Test}^{(l,k,m,n)}(\mathcal{D}_{Test}) = -\frac{1}{m}\sum_{i=1}^m \log{(p^n_{\theta_k}(y_{i+n}|\Bar{g}(x_{\leq i+n})))}$. Following ~\cite{ManningS1999}, we consider languages to show stationary, ergodic properties. Now, taking $m \to \infty$, and using stationarity of $p(\cdot)$, we define $\mathcal{L}_{Test}^{(l,k,n)}(\mathcal{D}_{Test}) = \lim_{m \to \infty} \mathcal{L}_{Test}^{(l,k,m,n)}(\mathcal{D}_{Test})$ to get
\begin{align}\label{eqn: test_loss}
&\mathcal{L}_{Test}^{(l,k,n)}(\mathcal{D}_{Test}) =\nonumber \\
&-\sum_{(y_k, x_k,\ldots, x_1) \in \{0,1\}^k}p(y_k, x_k\ldots, x_1) \log{(p^n_{\theta_k}(y_k|\Bar{g}(x_{k},\ldots,x_1)))}.
\end{align}
For fixed data Markov order $l$ with known state function $g$ and estimator Markov order $k$, suppose the minimum loss is obtained by an estimator $p^{n,*}_{\theta_k}$. Then, this minimum loss is 
\begin{align}\label{eqn: optimal_test_loss}
&\mathcal{L}_{Test}^{(l,k,n),*}(\mathcal{D}_{Test}) =\nonumber\\
-&\sum_{(y_k, x_k,\ldots, x_1) \in \{0,1\}^{k+1}}p(y_k, x_k,\ldots, x_1) \log{(p^{n,*}_{\theta_k}(y_k|g(x_{k},\ldots,x_1)))}.
\end{align}

\paragraph{Bayesian estimator} 
An order-$k$ Bayesian estimator with a given state function $g$ is of the form 
\begin{align}\label{eqn: k_estimator}
p_{\theta_k}^n(Y_{k+i} = y_k|g(X_{k+i}, \ldots, X_{1+i}) = g(x_{k}, \ldots, x_{1}))\nonumber \\ =\begin{cases}\frac{N{(y_k; g(x_{k}, \ldots, x_{1}))}}{N{(g(x_{k}, \ldots, x_{1}))}} & \text{if } k>0,\\
\frac{N{(y_0)}}{n} & \text{if } k=0,
\end{cases}
\end{align}
where $N{(y_k; g(x_{k}, \ldots, x_{1}))}$ denotes the number of counts of $Y_{k+i} = y_k$ and $g(X_{k+i}, \ldots, X_{1+i}) = g(x_{k}, \ldots, x_{1})$ for $0 \leq i \leq (n-k)$ in the train set, $\mathcal{D}_{Train}$. Similarly, $N{(g(x_{k}, \ldots, x_{1}))}$ is the number of occurrences of the subsequence  $g(X_{k+i}, \ldots, X_{1+i}) = g(x_{k}, \ldots, x_{1})$, $N(y_0)$ is the number of occurrences of $Y_i = y_0$ for $0 \leq i \leq (n-k)$, and $n$ is the size of the train dataset. 
Note that the order $l$ and the state function $g$ of the source is unknown to the estimator, hence it is not obvious what $k$ and $g$ should be chosen for best test performance.

Now we state and prove the theorem giving limits to the test loss obtained by any Transformer architecture with attention span $k$. Thereafter we show the obtained limits are indeed optimal. Proofs are  in Appendix~\ref{sec: proofs}.

\begin{theorem}\label{thm: thm_convergence}
Let $\{\mathcal{D}_{Train}, \mathcal{D}_{Test}\}$ be an order-$l$ Markov dataset with some state function $g$. Let $\Bar{g}$ be the state function of a Transformer such that $H(Y_m|g(X_m,\ldots,X_{1})) = H(Y_m|\Bar{g}(X_m,\ldots,X_{1}))$ for $m=k, l$, e.g. choose $\Bar{g}$ as the identity function and let $p_{\theta_k}^n$ be the order-$k$ Bayesian estimator in \eqref{eqn: k_estimator} with state function $\Bar{g}$. Let $\mathcal{L}_{Test}^{(l,k,n)}(\mathcal{D}_{Test})$ be the corresponding test loss of the Transformer. 

Then, if $l \leq k$,
\begin{align}\label{eqn: thm_l_lq_k}
    \lim_{n \to \infty} \mathcal{L}_{Test}^{(l,k,n),*}(\mathcal{D}_{Test}) = H(Y_l|X_l,\ldots,X_{1}), 
\end{align}
whereas, if $l > k$,
\begin{align}\label{eqn: thm_k_lq_l}
    \lim_{n \to \infty} \mathcal{L}_{Test}^{(l,k,n),*}(\mathcal{D}_{Test}) = H(Y_k|X_k,\ldots,X_{1}). 
\end{align}
Here $H(Y_m|X_m,\ldots,X_{1})$ is the conditional entropy of the distribution $p(y_m|x_m,\ldots,x_{1})$ for $m=l, k$. 

Moreover, $\mathcal{L}_{Test}^{(l,k,n)}(\mathcal{D}_{Test}) \to \mathcal{L}_{Test}^{(l,k,n),*}(\mathcal{D}_{Test})$ with convergence error $\mathcal{O}({\frac{|\mathcal{S}|}{n}})$ in expectation in both the cases, \eqref{eqn: thm_l_lq_k}, \eqref{eqn: thm_k_lq_l}, where $\mathcal{S}$ is the range of the state function $\Bar{g}$.
\end{theorem}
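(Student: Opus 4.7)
The plan is to split the argument into (i) identifying the asymptotically optimal predictor and the resulting entropy value, and (ii) controlling the finite-sample gap between the Bayesian estimator in \eqref{eqn: k_estimator} and that optimum. Throughout, the hypothesis $H(Y_m\mid g) = H(Y_m\mid \bar g)$ for $m \in \{k,l\}$ is used precisely to swap $\bar g$ for $g$ inside the cross-entropy so that the optimization and the true Markov structure of the source are expressed over the same sigma-algebra; this is what makes the choice $\bar g = \mathrm{id}$ legitimate.

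First I would pin down $p^{n,*}_{\theta_k}$ via Gibbs' inequality: the test loss \eqref{eqn: test_loss} is, up to an additive constant, the KL divergence between the true conditional $p(y\mid g(x_k,\ldots,x_1))$ and the candidate, so the unique minimizer (among $k$-order stationary Markov estimators) is the true conditional itself, and the minimum value is $H(Y\mid g(X_k,\ldots,X_1))$. In the regime $l \le k$, order-$l$ Markovity of the source collapses $p(y\mid x_k,\ldots,x_1) = p(y\mid x_l,\ldots,x_1)$, giving \eqref{eqn: thm_l_lq_k}. In the regime $l > k$, the $k$-order model is necessarily misspecified; the best $k$-order predictor is forced to be the marginal $p(y\mid x_k,\ldots,x_1)$, and the resulting expected negative log-likelihood is \eqref{eqn: thm_k_lq_l} by definition of conditional entropy.

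Second, for convergence of the empirical Bayesian estimator to this optimum, I would invoke stationarity and ergodicity (assumed in the dataset paragraph): the normalized counts $N(s)/n$ and $N(y;s)/n$ used in \eqref{eqn: k_estimator} converge almost surely to the true joint/marginal probabilities over $\mathcal{S}\times\{0,1\}$. Continuity of $\log$ plus dominated convergence then give $\mathcal{L}_{Test}^{(l,k,n)}(\mathcal{D}_{Test}) \to \mathcal{L}_{Test}^{(l,k,n),*}(\mathcal{D}_{Test})$ almost surely.

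For the non-asymptotic $\mathcal{O}(|\mathcal{S}|/n)$ rate, I would write the expected difference as
\begin{equation*}
\mathbb{E}\!\left[\mathcal{L}_{Test}^{(l,k,n)} - \mathcal{L}_{Test}^{(l,k,n),*}\right] = \sum_{s \in \mathcal{S}} p(s)\,\mathbb{E}\!\left[D\!\left(p(\cdot\mid s)\,\|\,\hat{p}_n(\cdot\mid s)\right)\right],
\end{equation*}
where $\hat{p}_n(\cdot\mid s)$ is a multinomial empirical conditional based on $N(s)$ observations. A second-order Taylor expansion of the KL around $\hat p_n = p$ (equivalently the $\chi^2$ approximation) gives the classical bias $\mathbb{E}[D(p\|\hat p_n)\mid N(s)] = \frac{|\mathcal{Y}|-1}{2 N(s)} + o(1/N(s))$, and ergodicity yields $\mathbb{E}[1/N(s)] = \Theta(1/(n\,p(s)))$, so the per-state contribution is $\Theta(1/n)$ and summing over $|\mathcal{S}|$ states gives the claimed $\mathcal{O}(|\mathcal{S}|/n)$ bound. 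The main obstacle is handling rare states where $N(s)$ is small or zero (making $\log \hat p_n$ singular); I would either invoke add-one (Laplace) smoothing inside \eqref{eqn: k_estimator} or split off the event $\{N(s) < \epsilon n p(s)\}$, whose probability is exponentially small by a Chernoff bound on the ergodic sum and therefore contributes negligibly compared to $|\mathcal{S}|/n$.
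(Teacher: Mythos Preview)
Your proposal is correct and follows essentially the same route as the paper: decompose the test loss as conditional entropy plus a KL term (the paper does this explicitly, you call it Gibbs' inequality), identify the optimum via non-negativity of KL together with the entropy identity $H(Y_m\mid \bar g)=H(Y_m\mid g)=H(Y_m\mid X_m,\ldots,X_1)$, and obtain the $\mathcal{O}(|\mathcal{S}|/n)$ rate by a second-order (i.e.\ $\chi^2$) expansion of $D(p\|\hat p_n)$ combined with $N(s)\approx n\,p(s)$ and a sum over the $|\mathcal{S}|$ states. If anything, your treatment is slightly more careful---the paper explicitly restricts itself to ``approximate calculations'' and does not address the small-$N(s)$ singularity you flag.
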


Now, we show that the limits obtained in Thm.~\ref{thm: thm_convergence} are optimal.

\begin{theorem}\label{thm: optimal_limit}
Let $\{\mathcal{D}_{Train}, \mathcal{D}_{Test}\}$ be an order-$l$ Markov dataset with some state function $g$. Then, let $h$ be some arbitrary function taking as input $(X_{i}, \ldots, X_{i+k})$, and outputs some probability $h(Y_{i+k}|X_{i}, \ldots, X_{i+k})$ over the space of $Y_{i+k}$. Then, the optimal cross-entropy loss obtained by $h$, $\sum_{(y_k, x_k,\ldots, x_1) \in \{0,1\}^k}p(y_k, x_k\ldots, x_1) \log{(h(y_k| x_k\ldots, x_1))}$ is greater than or equal to $H(Y_l|X_l, \ldots, X_1)$ if $l \leq k$, else, greater than or equal to $H(Y_k|X_k, \ldots, X_1)$.
\end{theorem}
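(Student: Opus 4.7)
The statement is the classical observation that cross-entropy is lower-bounded by entropy (Gibbs' inequality), combined with a reduction via the Markov property. I will read the loss as the standard cross-entropy $-\sum p \log h$ (the sign in the statement appears to be a typo; otherwise the inequality direction would be reversed). The plan has two short steps.

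\textbf{Step 1: Gibbs bound.} For any conditional distribution $h(\cdot\mid x_k,\ldots,x_1)$ on $\{0,1\}$, I apply the non-negativity of the conditional KL divergence to get
\begin{equation*}
-\sum_{y_k,x_k,\ldots,x_1} p(y_k,x_k,\ldots,x_1)\log h(y_k\mid x_k,\ldots,x_1) \;\geq\; H(Y_k\mid X_k,\ldots,X_1),
\end{equation*}
with equality if and only if $h(\cdot\mid x_k,\ldots,x_1) = p(\cdot\mid x_k,\ldots,x_1)$ on the support of the marginal. Achievability follows by taking $h$ equal to the true posterior.

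\textbf{Step 2: Reduce via the Markov order.} I now identify the right-hand side with the entropy stated in the theorem using only the order-$l$ Markov hypothesis and stationarity.

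If $l > k$, then $h$ observes fewer past symbols than the Markov order and no further simplification is available, so the bound is exactly $H(Y_k\mid X_k,\ldots,X_1)$, matching \eqref{eqn: thm_k_lq_l}.

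If $l \leq k$, the Markov property gives $p(y_k\mid x_k,\ldots,x_1) = p(y_k\mid x_k,\ldots,x_{k-l+1})$, so conditioning on the additional $k-l$ variables does not reduce entropy, i.e.\ $H(Y_k\mid X_k,\ldots,X_1) = H(Y_k\mid X_k,\ldots,X_{k-l+1})$. Stationarity of $p$ then lets me shift indices to obtain $H(Y_l\mid X_l,\ldots,X_1)$, matching \eqref{eqn: thm_l_lq_k}.

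\textbf{Where the difficulty lies.} The argument itself is essentially one application of Gibbs' inequality plus an index shift, so no genuine obstacle arises. The only careful bookkeeping is in Step 2: one must explicitly invoke stationarity of $p(x_1,\ldots,x_{n+m})$ (already stipulated in the loss setup via the $m\to\infty$ reduction to \eqref{eqn: test_loss}) in order to identify $H(Y_k\mid X_k,\ldots,X_{k-l+1})$ with $H(Y_l\mid X_l,\ldots,X_1)$, and one should note that the bound holds uniformly over all $h$, so in particular over the family of estimators realized by any Transformer with attention span $k$. This makes the limits of Thm.~\ref{thm: thm_convergence} tight.
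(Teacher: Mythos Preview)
Your proposal is correct and follows essentially the same route as the paper: decompose the cross-entropy as conditional entropy plus a non-negative (conditional) KL divergence, then use the Markov property together with stationarity to identify $H(Y_k\mid X_k,\ldots,X_1)$ with $H(Y_l\mid X_l,\ldots,X_1)$ when $l\le k$. Your observation about the sign typo in the loss expression is also correct.
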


\subsection{Understanding the limits}\label{subsec: understanding_limits}

\begin{figure}
\centering     
\includegraphics[width=55mm]{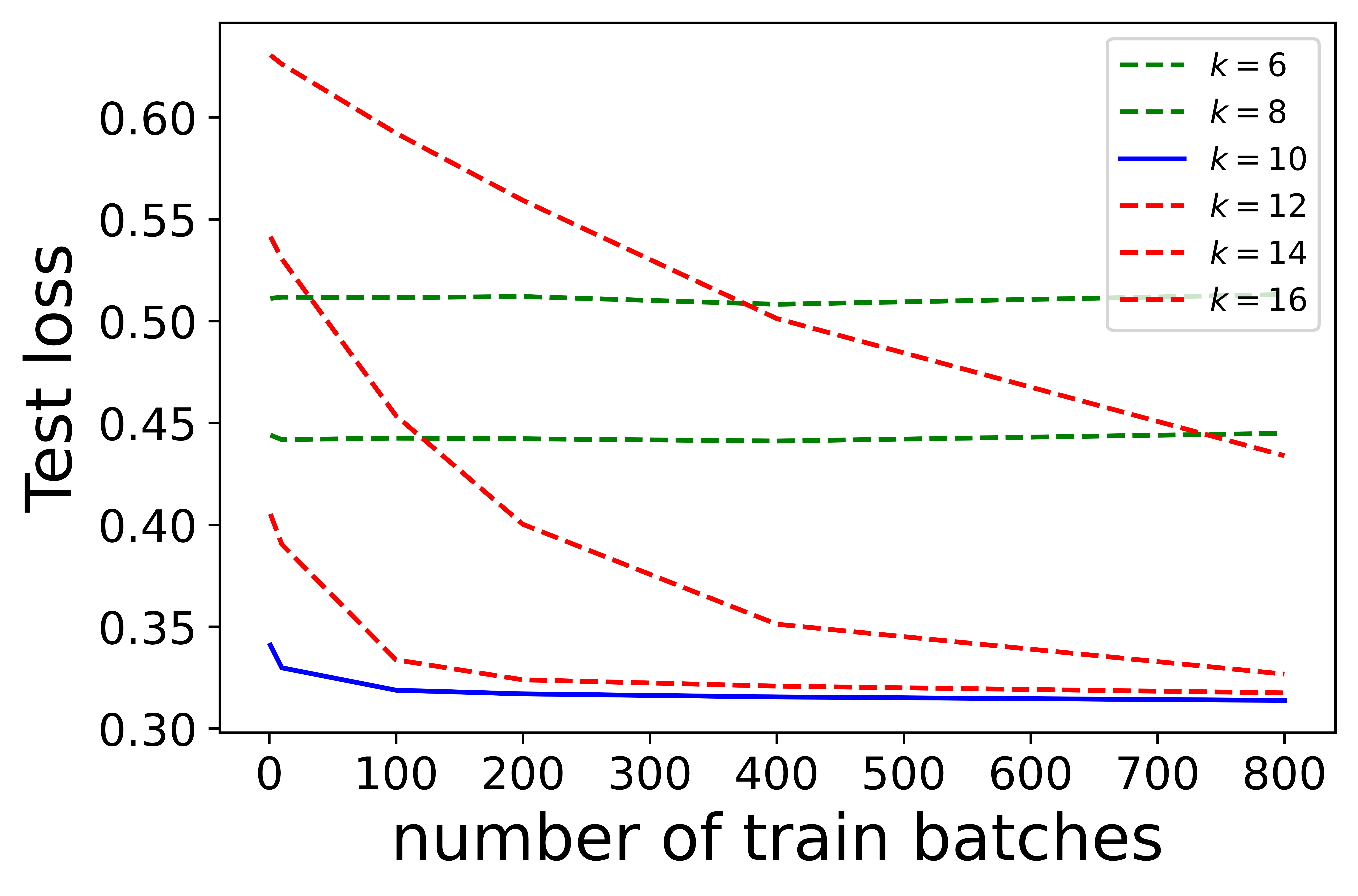} \\($l=10$)
\\

\includegraphics[width=55mm]{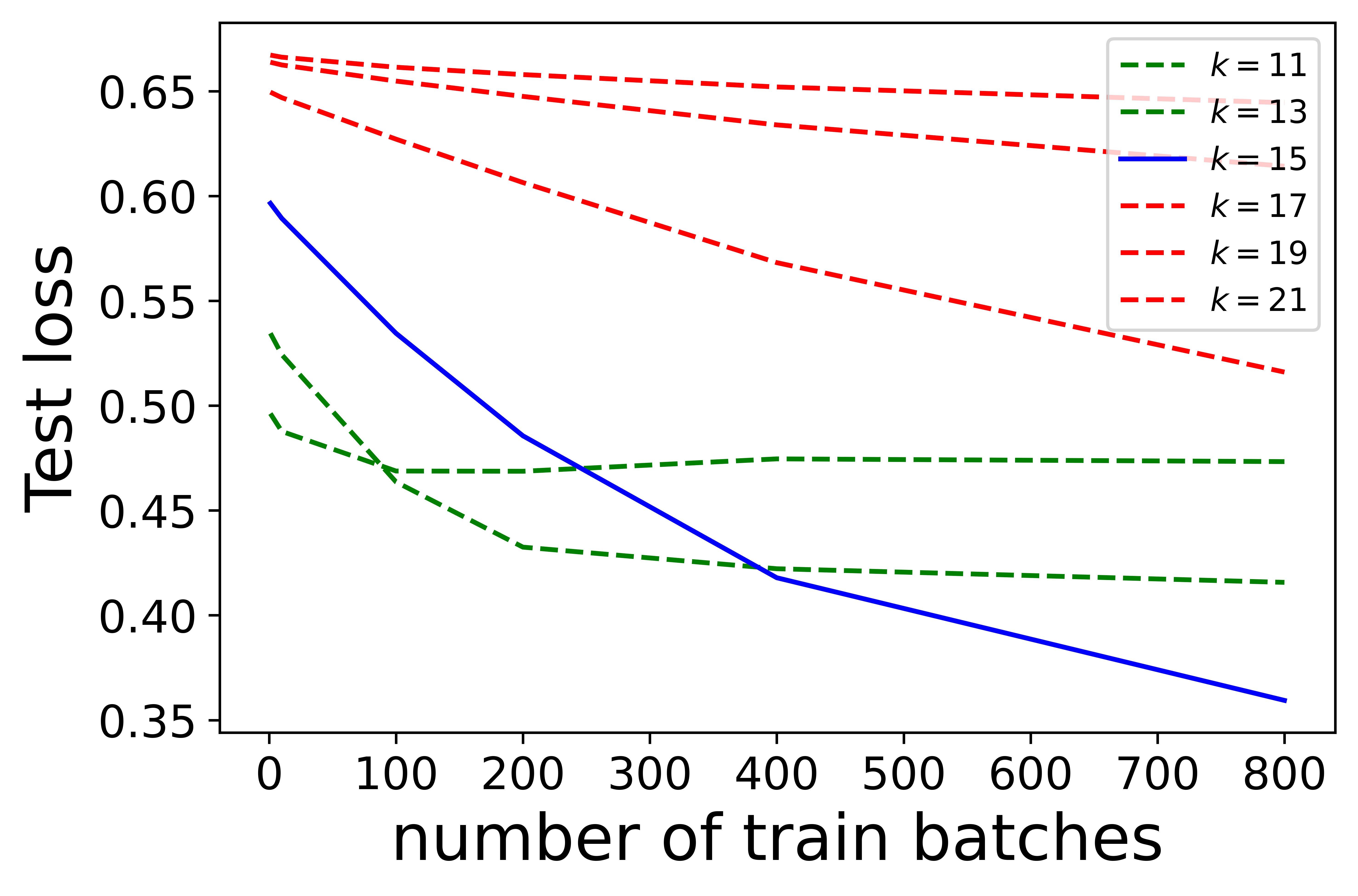} \\
($l=15$) \\

\includegraphics[width=55mm]{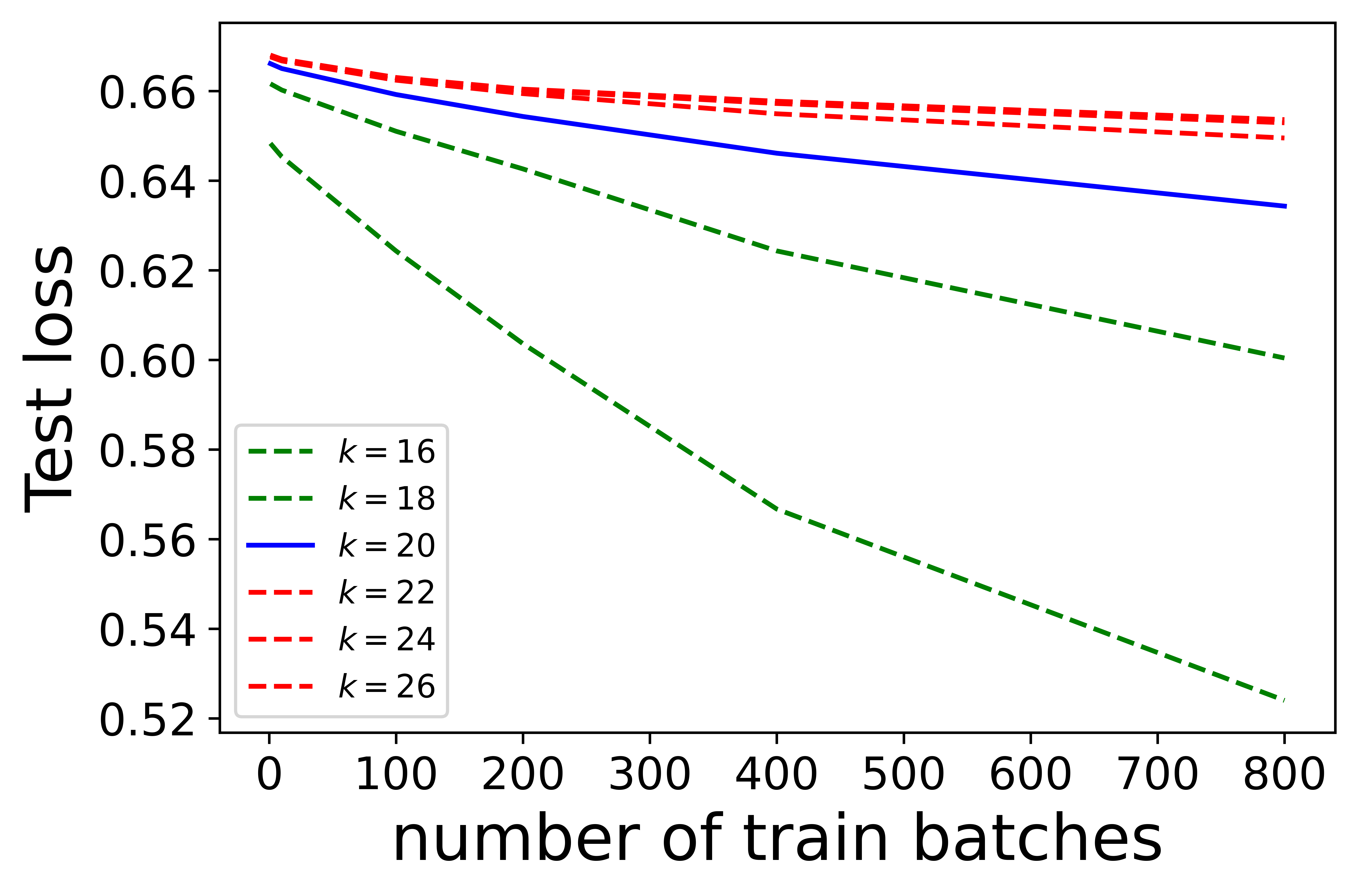}

($l=20$)

\caption{Performance of Finite State Markov Predictor on a Boolean synthetic dataset, where the output is 1 if the sum of previous $l$ observations is greater than a threshold, else 0. Plot illustrates the nature of the obtained limit in Thm.~\ref{thm: thm_convergence}. Smaller values of $k$ converges faster but to a higher value of test loss compared to larger values.}
\label{fig: fsmp}
\end{figure}

From the limits in Thm.~\ref{thm: thm_convergence}, we know that for $k < l$, increasing the value of $k$ improves the obtained limit. This is because conditioning reduces entropy and hence using the stationarity property $H(Y_k|X_{k},\ldots,X_1) \leq H(Y_{k-1}|X_{k-1},\ldots,X_1)$~\cite{CoverT1999}. But, for $k > l$, $H(Y_l|X_{l},\ldots,X_1) = H(Y_{k}|X_{k},\ldots,X_1)$. To better understand the limits in Thm.~\ref{thm: thm_convergence}, we show the performance of an FSMP with the aggregation function $g$ as the identity function with span of the input $k$, and the output probability function $f$ is set as the Bayesion predictor in \eqref{eqn: k_estimator}. We test the performance of the FSMP on a synthetic Boolean dataset, where the input is randomly generated binary values and the output at position $i$ is 1 if the sum of the past $l$ observations is greater than a threshold, else, 0. The main observation to note from Fig.~\ref{fig: fsmp} is that for smaller values of $k$, the convergence is faster compared to larger values, but, the value the losses converge to is much worse than for larger values of $k$.

The limits in Thm.~\ref{thm: thm_convergence} gives us two important design choices for efficiently designing and training Transformers. First, the choice of the state function $\Bar{g}$ and its attention span $k$. Second, the amount of data, $n$, to train on.

The choice of $\Bar{g}$ and $k$ must be such that it does not lose any information important to the output, i.e. $H(Y_m|g(X_m,\ldots,X_{1})) = H(Y_m|\Bar{g}(X_m,\ldots,X_{1}))$. One such $\Bar{g}$ can be simply the identity function. The corresponding $k$ must be chosen large enough, i.e. $k \geq l$. Note that for all such $\Bar{g}$ and $k$ the optimal value remains the same as $n \to \infty$, but, in the finite data regime, one needs to choose appropriate $\Bar{g}$ and $k$ to get the best performance. In particular, one needs to choose $\Bar{g}$ and $k$ that has the smallest range $\mathcal{S}$ while also satisfying $H(Y_m|g(X_m,\ldots,X_{1})) = H(Y_m|\Bar{g}(X_m,\ldots,X_{1}))$. Thus, it indicates the need for sparsity in the design.

For the second choice, $n$, while it may seem obvious that training on larger data would give better performance, we show that with increase in data, the improvement in performance is significant, even with training for the same number of training steps. Moreover, this also motivates an augmentation technique that helps statistically explain the benefit gained from using relative positional encodings~\cite{shaw2018self}.

In Appendix \ref{sec: finite_data_analysis}, we further aim to understand Transformers and data-efficient training.  In Appendix \ref{sec: experiments}, we provide further experiments on real-world data.



\section{Conclusion}\label{sec: conclusion}
Here, we prove that the Transformer architecture yields universal predictors in the information-theoretic sense, which may help explain why this architecture seems to be a kind of state-of-the-art universal computation system. We further analyzed the different components of the Transformer architecture, such as attention weights and positional encodings.

Going forward, it may be of interest to take inspiration from information-theoretic work in universal prediction \cite{MerhavF1998} to inspire novel neural architectures.

\bibliography{refs}
\bibliographystyle{icml2023}

\newpage
\appendix
\onecolumn

\appendix
\section{Proofs}\label{sec: proofs}
\begin{lemma}\label{lem: entropy_eq}
$H(Y_l|g(X_l,\ldots,X_{1})) = H(Y_l|X_l,\ldots,X_{1})$ for the data generation process in Sec.~\ref{sec: universal_predictability}.
\end{lemma}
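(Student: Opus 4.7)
The plan is to unpack the definition of the data generating process and observe that it amounts to a Markov-chain statement $(X_l,\ldots,X_1) \to g(X_l,\ldots,X_1) \to Y_l$. Once this is in place, the lemma reduces to a short computation with conditional probabilities.

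First I would recall the data generating assumption from Section~\ref{sec: universal_predictability}: $y_i$ is generated conditioned on $g(x_i,\ldots,x_{(i-l+1)_+})$, so for $i=l$ the conditional law of $Y_l$ given $(X_l,\ldots,X_1)$ factors through $g$, i.e.\ $p(y_l \mid x_l,\ldots,x_1) = p(y_l \mid g(x_l,\ldots,x_1))$ for every tuple $(x_l,\ldots,x_1)$. This is the only structural fact I need; no further properties of $g$ are used. Note also that $g(X_l,\ldots,X_1)$ is a deterministic function of $(X_l,\ldots,X_1)$, so $p(g(x_l,\ldots,x_1) \mid x_l,\ldots,x_1)$ is trivially $1$.

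Next I would substitute into the definition of conditional entropy. Writing $s = g(x_l,\ldots,x_1)$,
\begin{align*}
H(Y_l \mid X_l,\ldots,X_1) &= -\sum_{y_l, x_l,\ldots,x_1} p(y_l, x_l,\ldots,x_1)\log p(y_l \mid x_l,\ldots,x_1) \\
&= -\sum_{y_l, x_l,\ldots,x_1} p(y_l, x_l,\ldots,x_1)\log p(y_l \mid s).
\end{align*}
Grouping the inner sum by the value of $s$ and using $p(y_l, s) = \sum_{(x_l,\ldots,x_1): g(x_l,\ldots,x_1)=s} p(y_l, x_l,\ldots,x_1)$, this collapses to $-\sum_{y_l, s} p(y_l, s)\log p(y_l \mid s) = H(Y_l \mid g(X_l,\ldots,X_1))$, which is the desired identity.

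There is no real obstacle here; the only subtlety is the direction of the inequality one might naively expect. A priori data processing gives $H(Y_l \mid X_l,\ldots,X_1) \leq H(Y_l \mid g(X_l,\ldots,X_1))$ because $g$ discards information, but the hypothesis that $Y_l$ was produced through $g$ alone closes the gap and forces equality. I would flag this briefly so the reader sees that the lemma is exactly the statement that $g$ is a sufficient statistic of $(X_l,\ldots,X_1)$ for $Y_l$, at which point it is used in the proof of Theorem~\ref{thm: thm_convergence} to replace the raw input window by the state.
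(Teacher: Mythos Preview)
Your proof is correct and follows essentially the same approach as the paper: both identify the Markov chain $(X_l,\ldots,X_1) \to g(X_l,\ldots,X_1) \to Y_l$ from the data-generating assumption and use it together with the fact that $g$ is a deterministic function of $(X_l,\ldots,X_1)$. The only cosmetic difference is that the paper routes the argument through the bridge quantity $H(Y_l \mid g(X_l,\ldots,X_1), X_l,\ldots,X_1)$, while you substitute $p(y_l \mid x_l,\ldots,x_1) = p(y_l \mid g(x_l,\ldots,x_1))$ directly into the entropy sum; these are equivalent one-line computations.
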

\begin{proof}
We know $Y_l-g(X_l,\ldots,X_{1})-(X_l,\ldots,X_{1})$ forms a Markov chain, hence, $H(Y_l|g(X_l,\ldots,X_{1}), X_l,\ldots,X_{1}) = H(Y_l|g(X_l,\ldots,X_{1}))$, But, since $g(X_l,\ldots,X_{1})$ is a function of $(X_l,\ldots,X_{1})$, we have $H(Y_l|g(X_l,\ldots,X_{1}), X_l,\ldots,X_{1}) = H(Y_l|X_l,\ldots,X_{1})$.
\end{proof}

\begin{proof}[Thm.~\ref{thm: thm_convergence}]
First we consider the case where $l=0, k=0$ and understand the behavior of $\mathcal{L}_{Test}^{(l,k,n)}(\mathcal{D}_{Test})$ and $\mathcal{L}_{Test}^{(l,k,n),*}(\mathcal{D}_{Test})$ as a function of $n$. Then, we will use the results from these cases to understand more general cases varying both $l$ and $k$.

\paragraph*{Case $l=0, k=0$} Consider order $0$ estimator in \eqref{eqn: k_estimator}. We first compute $\mathcal{L}_{Test}^{(0,0,n),*}(\mathcal{D}_{Test})$ by writing 
\begin{align}\label{eqn: loss_func_l_eq_k}
\mathcal{L}_{Test}^{(0,0,n)}(\mathcal{D}_{Test}) = \sum_{y_0 \in \{0,1\}} p(y_0)\log{\frac{1}{p(y_0)}} + D(p||p_{\theta^n_0}),
\end{align}
where $D(p||p_{\theta^n_0}) = \sum_{y_0 \in \{0,1\}} p(y_0)\log{\frac{p(y_0)}{p_{\theta^n_0}(y_0)}}$ is the KL divergence between $p$ and $p_{\theta^n_0}$. Further, from basic information-theoretic inequalities we know that $D(p||q) \geq 0$ with the equality holding only when $p=q$ \cite{CoverT1999}. Thus, we have $\mathcal{L}_{Test}^{(0,0,n),*}(\mathcal{D}_{Test}) = H(Y_0)$, where $H(Y_0) = \sum_{y_0 \in \{0,1\}} p(y_0)\log{\frac{1}{p(y_0)}}$ is the entropy of the distribution $\{p(y_0): y_0 \in \{0,1\}\}$. Moreover, we have $p^{n,*}_{\theta_0}(y_0) = p(y_0)$ for $y_0 \in \{0,1\}$.

Now, we show that $D(p||p_{\theta^n_0}) \to 0$ as $n\to \infty$ and further analyze its convergence rate. Using concentration inequalities, it follows that $p_{\theta^n_0} \to p(y_0)$ with rate $\mathcal{O}(\frac{1}{\sqrt{n}})$ with high probability. For brevity, we do not provide the calculations and use results from \citet{CoverS1977} that show the convergence rate to be $\mathcal{O}(\frac{1}{\sqrt{n}})$ in expectation. Now we show that this implies $D(p(y_0)||p_{\theta^n_0}) \to 0$ with an approximate rate $\mathcal{O}(\frac{1}{n})$. To this end, take $p_{\theta^n_0} = p(y_0)+t$, where $t = \mathcal{O}(\frac{1}{\sqrt{n}})$. Say, $p(1) = p = p(y_0)$ and hence, $p(0) = 1-p$. Then, we have
\begin{align*}
D(p||p+t) &= p\log{\frac{p}{p+t}} + (1-p)\log{\frac{(1-p)}{(1-p-t)}} \nonumber \\
        &= -\frac{1}{\ln{2}}\left( p\ln{(1+\frac{t}{p})} + (1-p)\ln{(1-p-t)} \right) \mbox{.}\nonumber
\end{align*}
Assuming $n$ to be large enough such that $t \ll \min{\{p,1-p\}}$ and using the Taylor expansion of $\ln{(1-x)}$ up to second-order terms, we have
\begin{align}
D(p||p+t) &\approx \frac{t^2}{\ln{2}}\left(\frac{1}{p} + \frac{1}{1-p}\right).\nonumber
\end{align}
Thus, $D(p||p_{\theta^n_0}) \to 0$ with $\mathcal{O}(\frac{1}{n})$ in expectation.

Next, we generalize these optimal loss and convergence results for general $l$ and $k$ using results from the current case.

\paragraph*{Case $l \leq k$}
Here we have the evaluation loss $\mathcal{L}_{Test}^{(l,k,n)}(\mathcal{D}_{Test})$ as
\begin{align} \label{eqn: loss_func_l_less_k}
    \mathcal{L}_{Test}^{(l,k,n)}(\mathcal{D}_{Test}&) = H(Y_{l}|\Bar{g}(X_{l},\ldots,X_1)) + \nonumber \\ 
    & \sum_{(x_k,\ldots,x_{1})} p(x_k,\ldots,x_{1}) \times \\
    & D(p(y_k|\Bar{g}(x_k,\ldots,x_{1})) || p_{\theta_k}^n(y_k|\Bar{g}(x_k,\ldots,x_{1}))),
\end{align}
where $p_{\theta_k}^n(y_k|\Bar{g}(x_k,\ldots,x_{1}))$ is as defined in \eqref{eqn: k_estimator}. Since $D(p||q)) \geq 0$ with equality holding only when $p = q$, we have $\mathcal{L}_{Test}^{(l,k,n),*}(\mathcal{D}_{Test}) = H(Y_l|\Bar{g}(X_l,\ldots,X_{1})) = H(Y_l|g(X_l,\ldots,X_{1})) = H(Y_l|X_l,\ldots,X_{1})$, where $H(Y_l|g(X_l,\ldots,X_{1}))$ is the conditional entropy of the distribution $p(y_k|g(x_k,\ldots,x_{1}))$. The equality $H(Y_l|g(X_l,\ldots,X_{1})) = H(Y_l|X_l,\ldots,X_{1})$ follows directly and is proved in Lem.~\ref{lem: entropy_eq} for completeness. 

Moreover, we have $p^{n,*}_{\theta_k}(y_k|\Bar{g}(x_k,\ldots,x_{1})) = p(y_k|\Bar{g}(x_k,\ldots,x_{1}))$ for $(y_{k}, x_k,\ldots,x_{1}) \in \{0,1\}^{k+1}$. Note that here, for $l \leq k$, $\mathcal{L}_{Test}^{(l,k,n),*}(\mathcal{D}_{Test}) = H(Y_{l}|\Bar{g}(X_{l-1},\ldots,X_0))$ is independent of $k$. 

Now we look at the convergence rate of $D(p(y_k|\Bar{g}(x_k,\ldots,x_{1})) || p_{\theta_k}^n(y_k|\Bar{g}(x_k,\ldots,x_{1}))) \to 0$ as $n \to \infty$. Here we only perform approximate calculations as this is sufficient for present purposes. From the previous case with $l=0,k=0$, we know $D(p(y_0) || p_{\theta_0}^n(y_0)) \to 0$ as $\mathcal{O}(\frac{1}{n})$. For the convergence of $D(p(y_k|\Bar{g}(x_k,\ldots,x_{1})) || p_{\theta_k}^n(y_k|\Bar{g}(x_k,\ldots,x_{1})))$, we will only look at the indices $y_k$ that have prefix $\Bar{g}(x_{k},\ldots,x_1)$. We know that the number of such prefixes is approximately equal to $n p(\Bar{g}(x_k,\ldots,x_{1}))$ in expectation. Thus $D(p(y_k|\Bar{g}(x_k,\ldots,x_{1})) || p_{\theta_k}^n(y_k|\Bar{g}(x_k,\ldots,x_{1}))) \to 0$ as $\mathcal{O}(\frac{1}{n p(\Bar{g}(x_k,\ldots,x_{1}))})$. Thus, from \eqref{eqn: loss_func_l_less_k} $\mathcal{L}_{Test}^{(l,k,n)}(\mathcal{D}_{Test})$ converges to $H(Y_{l}|\Bar{g}(X_{l},\ldots,X_1))$ approximately as $\sum_{\Bar{g}(x_{k},\ldots,x_1)}p(\Bar{g}(x_k,\ldots,x_{1}))\mathcal{O}(\frac{1}{n p(\Bar{g}(x_k,\ldots,x_{1}))}) \approx \mathcal{O}(\frac{|\mathcal{S}|}{n})$.

\paragraph*{Case $l > k$}
Here the evaluation loss $\mathcal{L}_{Test}^{(l,k,n)}(\mathcal{D}_{Test})$ is given by
\begin{align} \label{eqn: loss_func_l_ge_k}
    \mathcal{L}_{Test}^{(l,k,n)}(\mathcal{D}_{Test}) &= H(Y_{k}|\Bar{g}(X_{k},\ldots,X_1)) \nonumber \\ 
    & + \sum_{(x_k,\ldots,x_0)}p(x_k,\ldots,x_1)\times\\  & D(p(y_k|\Bar{g}(x_{k},\ldots,x_1)) || p_{\theta_k}^n(y_k|\Bar{g}(x_{k},\ldots,x_1))),
\end{align}
The optimal value of $\mathcal{L}_{Test}^{(l,k,n)}(\mathcal{D}_{Test})$ is clearly $H(Y_{k}|\Bar{g}(X_k,\ldots,X_1)) = H(Y_{k}|g(X_k,\ldots,X_1)) = H(Y_{k}|X_k,\ldots,X_1)$ from non-negativity of KL divergence and the data generation process. Moreover, the convergence rate of $\mathcal{L}_{Test}^{(l,k,n)}(\mathcal{D}_{Test})$ to $H(Y_{k}|X_{k},\ldots,X_1)$ approximately with rate $\mathcal{O}(\frac{|\mathcal{S}|}{n})$, the same way as in the previous case. 
\end{proof}

\begin{proof}[Thm.~\ref{thm: optimal_limit}]
The proof follows directly from the non-negativity of KL divergence.
\paragraph{$l \leq k$} Here, we have
\begin{align*}
    &\sum_{(y_k, x_k,\ldots, x_1) \in \{0,1\}^k}p(y_k, x_k\ldots, x_1) \log{(h(y_k| x_k\ldots, x_1))}\nonumber \\
    &= H(Y_k|X_k, \ldots, X_1) + \sum_{(x_k,\ldots, x_1) \in \{0,1\}^k}p(y_k, x_k\ldots, x_1) \times\\
    & D(p(y_k| x_k\ldots, x_1)||h(y_k| x_k\ldots, x_1))\nonumber\\
    &= H(Y_l|X_l, \ldots, X_1) + \sum_{(x_k,\ldots, x_1) \in \{0,1\}^k}p(y_k, x_k\ldots, x_1)\\
    & D(p(y_k| x_k\ldots, x_1)||h(y_k| x_k\ldots, x_1))
\end{align*}
Note that $H(Y_k|X_k, \ldots, X_1) = H(Y_l|X_l, \ldots, X_1)$ because $Y_k$ only depends on the past $l$ observations and the stationarity assumption. Further, we have $D(\cdot||\cdot) \geq 0$ to conclude the proof.

\paragraph{$l > k$} Here, we have
\begin{align*}
    &\sum_{(y_k, x_k,\ldots, x_1) \in \{0,1\}^k}p(y_k, x_k\ldots, x_1) \log{(h(y_k| x_k\ldots, x_1))}\nonumber \\
    &= H(Y_k|X_k, \ldots, X_1) + \sum_{(x_k,\ldots, x_1) \in \{0,1\}^k}p(y_k, x_k\ldots, x_1)\\
    & D(p(y_k| x_k\ldots, X_1)||h(y_k| x_k\ldots, x_1)),\nonumber \\
    &\geq H(Y_k|X_k, \ldots, X_1)
\end{align*}
where we have $D(\cdot||\cdot) \geq 0$.
\end{proof}

\section{Understanding Transformers and Data-Efficient Training}\label{sec: finite_data_analysis}
First, we understand the role of various components of a Transformer in the prediction process. Then, we show the importance of relative position encodings~\cite{shaw2018self} and group-equivariant positional encodings~\cite{romero2020group} statistically in resource-constrained datasets. In particular, our results show that these methods are beneficial only when the available data is small, but their effect is negligible for larger datasets. For this analysis, we use position augmentations and group augmentations instead of equivariance for the ease of analysis. In practice, both equivariance and augmentation enforce the same constraint on the model. While equivariance applies the constraints by design in the model, augmentation does it more implicitly when training.

\subsection{Understanding Transformers}\label{subsec: understanding_transformers}
Here we discuss the role of various components of a Transformer.
\paragraph{Attention weights} Attention weights play an important role in filtering out unimportant data points and hence help make the range of the state function $\Bar{g}$ as small as possible while also ensuring that the important information is retained. We validate this claim by showing that when attention weights are used, additional irrelevant information does not affect the loss values. Whereas, without attention, the loss values are negatively affected.

On the other hand, in the extreme case where all the input variables are important, we show, perhaps surprisingly, that attention performs worse than a network without it.

\paragraph{Absolute positional encoding and $FFN$} 
Absolute positional encoding introduced by \citet{vaswani2017attention} helps identify the order of the words in a sentence for the Transformer. Without positional encoding, the output of a Transformer is equivariant to the permutation of words in the input. This is not desirable since changing the order of words in a sentence can completely change its meaning or may even make it gibberish. Hence, the understanding of the order of the words is important to a Transformer. 

In our experiments, we find that $FFN$ does not effectively filter out unnecessary information like attention. But it does improve the results when used with attention, indicating that it helps produce a better representation of the state of the Transformer.

\subsection{Data-Efficient Training: Positional Augmentation}\label{subsec: data_efficient_training}
Equivariance and augmentations are considered to be two sides of the same coin.  Whereas one applies constraints on the model, the other transforms the data during the training process to implicitly introduce those constraints in the learnt functions. Here, we use group augmentations~\cite{chen2020group} to study its impact on the performance of Transformers as it helps our theoretical analysis. Augmentations do not change the asymptotic performance limits, but, improves finite data performance as we find next.

\paragraph{Positional augmentation} We study the role of translation augmentation in the performance of Transformers. The key to our analysis is the following: we transform the data while keeping the position encodings the same and since the input to the Transformer is a summation of the data and the absolute encodings, the input appears as new data. This effectively increases the number of training data points in the dataset. Thus, for each augmentation, we count the effective size of the augmented dataset and analyze the performance from the finite data loss expression obtained in Thm.~\ref{thm: thm_convergence}.

For translation augmentation for a dataset of size $n$ and each batch processing $n_{pos}$ tokens at once. A translation augmentation parametrized by $t_0$, can be obtained by shifting the data by $t$, where $t$ goes from $t_0$ to $n_{pos} - 1$. This way, the semantics of text does not change, but we generate more data points. From the finitary analysis in Thm.~\ref{thm: thm_convergence}, we know the loss value converges with rate $O(1/n)$. Hence, with augmentation $t_0$, we have the following gain in the convergence rate. The proof is direct from the assumption that there are no repetitions in the dataset and counting the size of the augmented dataset.

\begin{proposition}\label{prop: aug}
Let a dataset of size $n$ be augmented by $t_0$ with $n_{pos}$ tokens processed per batch. Then the gain convergence rate is $O(\frac{1}{n} (1 - \frac{1}{(n_{pos} - t_0)}))$.
\end{proposition}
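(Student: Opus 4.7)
The plan is to read off the conclusion directly from Theorem~\ref{thm: thm_convergence} once the effective sample size induced by the augmentation scheme is counted. First, I would count the number of shifted copies produced by the procedure: since $t$ ranges over $\{t_0, t_0+1, \ldots, n_{pos}-1\}$, each original sequence spawns $n_{pos} - t_0$ distinct shifted versions, so the augmented dataset has effective size $n(n_{pos} - t_0)$.

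Second, I would argue that each shifted copy contributes as a genuinely fresh training point from the Transformer's perspective. This uses the observation made in the preceding paragraph: because the Transformer input is the sum of the token embedding and the absolute positional encoding, shifting the data while fixing the positional encodings yields a distinct input sequence, and by the stated no-repetition assumption no two shifted copies collide. Consequently the Bayesian estimator in \eqref{eqn: k_estimator} accumulates $n(n_{pos}-t_0)$ independent counts rather than $n$.

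Third, I would substitute this effective size into the convergence bound of Theorem~\ref{thm: thm_convergence}: the unaugmented test loss converges to its optimum at rate $\mathcal{O}(1/n)$, whereas the augmented loss converges at rate $\mathcal{O}\bigl(\tfrac{1}{n(n_{pos}-t_0)}\bigr)$ (treating $|\mathcal{S}|$ as a constant absorbed in the big-$O$). Subtracting the augmented rate from the original rate gives the improvement
\[
\frac{1}{n} - \frac{1}{n(n_{pos}-t_0)} = \frac{1}{n}\left(1 - \frac{1}{n_{pos}-t_0}\right),
\]
which matches the stated rate.

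The main obstacle is conceptual rather than technical: once one accepts that the augmented copies behave as statistically fresh samples for the purposes of the counts in \eqref{eqn: k_estimator}, the rest is arithmetic. This step is where the no-repetition hypothesis and the additivity of absolute positional encodings with token embeddings enter essentially; without them the shifted copies could coincide with existing entries and fail to sharpen the estimator. I expect no other nontrivial difficulty, since the rate in Theorem~\ref{thm: thm_convergence} is already in closed form and scales linearly in $1/n$.
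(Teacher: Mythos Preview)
Your proposal is correct and matches the paper's own argument, which is simply the one-line remark preceding the proposition that the result follows by counting the augmented dataset size (yielding $n(n_{pos}-t_0)$ effective samples under the no-repetition assumption) and then plugging into the $\mathcal{O}(1/n)$ rate of Theorem~\ref{thm: thm_convergence}. You have spelled out the details the paper leaves implicit, including the subtraction that produces the stated gain, but the approach is identical.
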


Two important consequences to note from Prop.~\ref{prop: aug} are: a) the gain decreases proportionally with increase in $n$, b) it increases with decrease in $t_0$.

\section{Further Experiments}\label{sec: experiments}
Here we validate our findings using various experiments.
\subsection{Datasets}\label{subsec: datasets}
We work with three sequential datasets: two are synthetic and one is a real language dataset, WikiText2~\cite{merity2016pointer}. The synthetic datasets are MarkovBoolSum and MarkovBin2Dec. MarkovBoolSum with Markovity $l$, is constructed by first uniformly generating a binary input, then the labels are set as 1 if the sum of previous $l$ inputs sum to greater than or equal to $l/2$, else 0. For MarkovBin2Dec, the input is generated in a similar way as MarkovBoolSum, but, the output is the decimal value corresponding to the binary observation of the $l$ previous observations. The main difference between the two datasets is that in MarkovBoolSum, not all the inputs contribute to the output, hence attention should help performance. Whereas, in MarkovBin2Dec, all the input values contribute to the output, hence, attention might not perform as well. All experiments were run on one fixed seed.

\subsection{Model Architecture}\label{subsec: model_architecture}
We use a single attention layer Transformer with varying attention span. The model dimension is fixed to $64$, and the hidden dimension for $FFN$ is set to $128$. For our experiments, we keep three design choices: a) attention or aggregation; b) $FFN$ or no $FFN$; c) translation augmentation.

\subsection{Attention}\label{subsec: exp_attn}
In Tab.~\ref{tab: attention}, we show results that indicate the benefit of attention. The synthetic datasets used $l=5$, and were trained for 10000 train batches for 2000 steps of batch size 20. It is found that using attention, the difference between varying lengths of span $k$ is less than without attention. Also, in the case $k=5$ for MarkovBin2Dec, we find that attention performs poorly compared to no attention. This is because, for $k=5$, all points in the input are important and attention may have neglected some of the inputs.
\begin{table}
\centering
\caption{Attention}
\label{tab: non-eq-acc}
\begin{tabular}{lllllll}
Dataset       &       & \multicolumn{3}{l}{Attention} & \multicolumn{2}{l}{Aggregation}  \\
k             & 5     & 10   & 15   & 5               & 10    & 15                       \\
MarkovBoolSum & 0.016 & 0.34 & 0.36 & 0.011           & 0.375 & 0.421                    \\
MarkovBin2Dec & 0.036 & 0.87 & 0.99 & 0.015           & 0.91  & 1.08                     \\
WikiText2     & 6.17  & 6.18 & 6.19 & 6.49            & 6.66  & 6.70                    
\end{tabular}
\label{tab: attention}
\end{table}
\subsection{Positional Encodings}\label{subsec: exp_pos_enc}

\begin{table}
\centering
\caption{Position augmentation}
\label{tab: non-eq-acc}
\begin{tabular}{lllll}
n           & \multicolumn{2}{l}{1} & \multicolumn{2}{l}{1000}  \\
$t_0$       & 0    & 99             & 0     & 99                \\
loss values & 9.68 & 0.87           & 0.628 & 0.624     
\end{tabular}
\label{tab: pos_aug}
\end{table}

The results for position augmentation are given in Tab.~\ref{tab: pos_aug}, where we find that augmentation helps improve test loss performance. But, the loss is  significant only when the number of train batches are very small, corresponding to our theoretical analysis.


\end{document}